\tikzstyle{basicNode} = [circle, draw=black, fill=white!30]
\tikzstyle{arrow} = [thick,->,>=stealth]
\tikzstyle{transNode} = [circle, draw=none, fill=white!30]
\DeclareMathOperator*{\argmax}{arg\,max}
\newcommand{\ALG}[1]{ADVISER}
\title{ADVISER: AI-Driven Vaccination Intervention Optimiser for Increasing Vaccine Uptake in Nigeria}
\author{
Vineet Nair$^1$\and
Kritika Prakash$^1$\and
Michael Wilbur$^2$\and
Aparna Taneja$^1$\and\\
Corinne Namblard$^3$\and
Oyindamola Adeyemo$^3$\and
Abhishek Dubey$^2$\and
Abiodun Adereni$^3$\and\\
Milind Tambe$^1$\and
Ayan Mukhopadhyay$^2$\footnote{Corresponding Author}
\affiliations
$^1$Google Research, India\\
$^2$Vanderbilt University, USA\\
$^3$HelpMum, Nigeria\\
\emails
\{vineetn, kritikaprakash, aparnataneja, milindtambe\}@google.com,\\
\{michael.p.wilbur, abhishek.dubey, ayan.mukhopadhyay\}@vanderbilt.edu,\\
corinne.namblard@gmail.com,
\{oyindamola, biodun\}@helpmum.org
}
\begin{document}
\maketitle
\begin{abstract}
More than 5 million children under five years die from largely preventable or treatable medical conditions every year, with an overwhelmingly large proportion of deaths occurring in under-developed countries with low vaccination uptake. One of the United Nations' sustainable development goals (SDG 3) aims to end preventable deaths of newborns and children under five years of age. We focus on Nigeria, where the rate of infant mortality is appalling. We collaborate with HelpMum, a large non-profit organization in Nigeria to design and optimize the allocation of heterogeneous health interventions under uncertainty to increase vaccination uptake, the first such collaboration in Nigeria. Our framework, ADVISER: AI-Driven Vaccination Intervention Optimiser, is based on an integer linear program that seeks to maximize the cumulative probability of successful vaccination. Our optimization formulation is intractable in practice. We present a heuristic approach that enables us to solve the problem for real-world use-cases. We also present theoretical bounds for the heuristic method. Finally, we show that the proposed approach outperforms baseline methods in terms of vaccination uptake through experimental evaluation. HelpMum is currently planning a pilot program based on our approach to be deployed in the largest city of Nigeria, which would be the first deployment of an AI-driven vaccination uptake program in the country and hopefully, pave the way for other data-driven programs to improve health outcomes in Nigeria.
\end{abstract}

\section{Introduction}

% \textcolor{red}{Ayan: Margin enlarged for to-do comments. Remove before submission.}

The state of maternal and infant health in Nigeria is appalling. The estimated maternal mortality rate in Nigeria is about 814 per 100,000 live births; in comparison, Poland and Italy's maternal mortality rate is 2 deaths per 100,000 live births. In fact, Nigeria alone accounts for more than \textbf{10\% of maternal deaths globally}, while only accounting for 2.6\% of the world's population~\cite{whoNigeriaDeaths}. Infant deaths in the country are also shockingly high---\textbf{Nigeria loses 2300 children under five years of age daily}~\cite{okwuwa2020infant}. The sustainable development goals (SDG 1 and SDG 3) aim to mobilize resources to the developing world to address inequity due to poverty and end preventable deaths of infants completely~\cite{un_sdg}. However, we are far from achieving these goals.

In collaboration with HelpMum, a large non-profit organization based in Nigeria, we identify three significant challenges contributing to high mortality rates among mothers and infants. \textbf{First}, with an immunization rate of 13\% for children between 12-23 months, Nigeria has the lowest vaccination rate in Africa. While vaccination is available for free in Nigeria, lack of awareness about the importance of vaccination is one of the major concerns for the low uptake of vaccination.
%On a cursory glance, this is counter-intuitive since vaccinations in Nigeria are available for free. However, lack of awareness about the importance of vaccinations is a primary reason for the low uptake of vaccination. 
\textbf{Second}, HelpMum identified that a primary driver for mothers not taking their children for vaccination is the high transportation cost relative to their income; we found that 46\% of families analyzed as a part of this study earned less than \$25 per month. \textbf{Third}, although several organizations, such as HelpMum, strive to design interventions for at-risk mothers and children, there is a gross imbalance between resource availability and demand for healthcare services. 

\begin{figure*}%
    \centering
    \subfloat[\centering]{{\includegraphics[width=4.5cm]{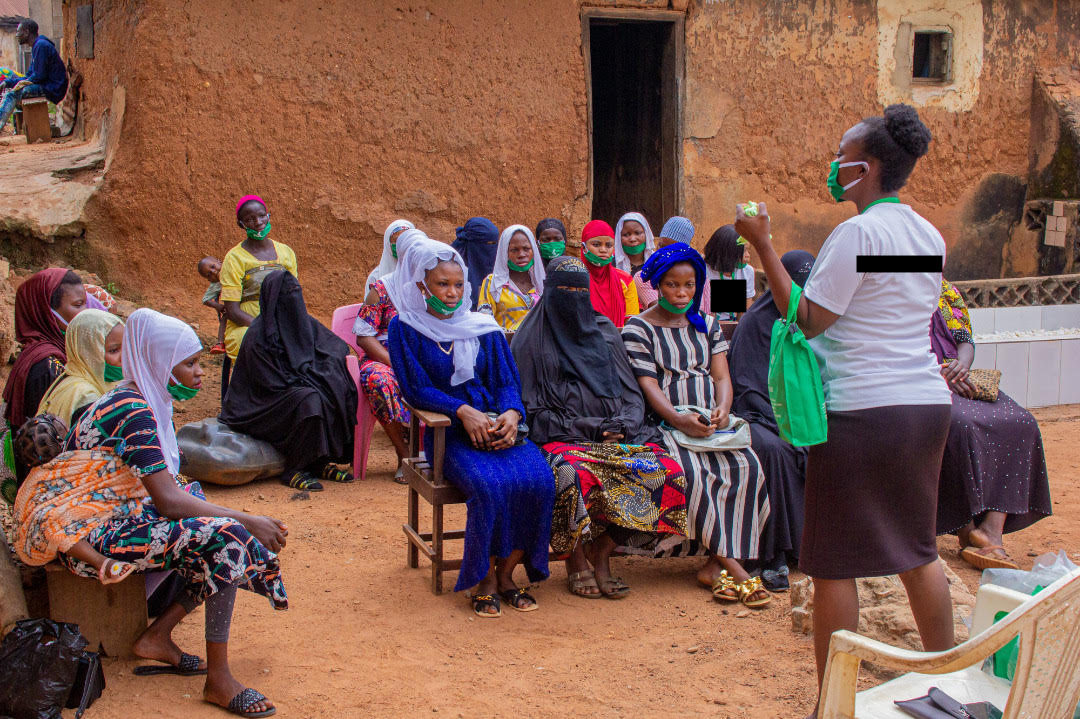}}}%
    \subfloat[\centering]{{\includegraphics[width=13cm]{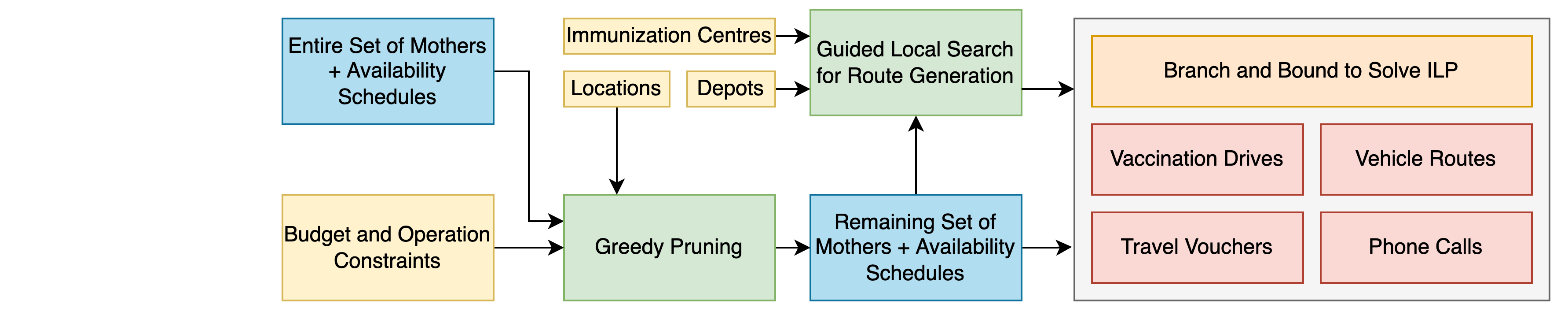}}}%
    \caption{(a) HelpMum reaches out to low-income neighborhoods to distribute clean birth kits and increase awareness of getting vaccinated. (b) A high-level overview of \textbf{ADVISER}: \textbf{A}I \textbf{D}riven \textbf{V}accination \textbf{I}ntervention Optimi\textbf{SER}. We formulate the allocation of heterogeneous resources as an integer linear program (ILP). We use a greedy pruning strategy to make the ILP tractable and use guided local search to generate promising vehicle routes.}%
    \label{fig:helpmumAndWorkflow}%
\end{figure*}
\vspace{-0.5em}
% \begin{figure}
%     \centering
%     \includegraphics[scale=0.25]{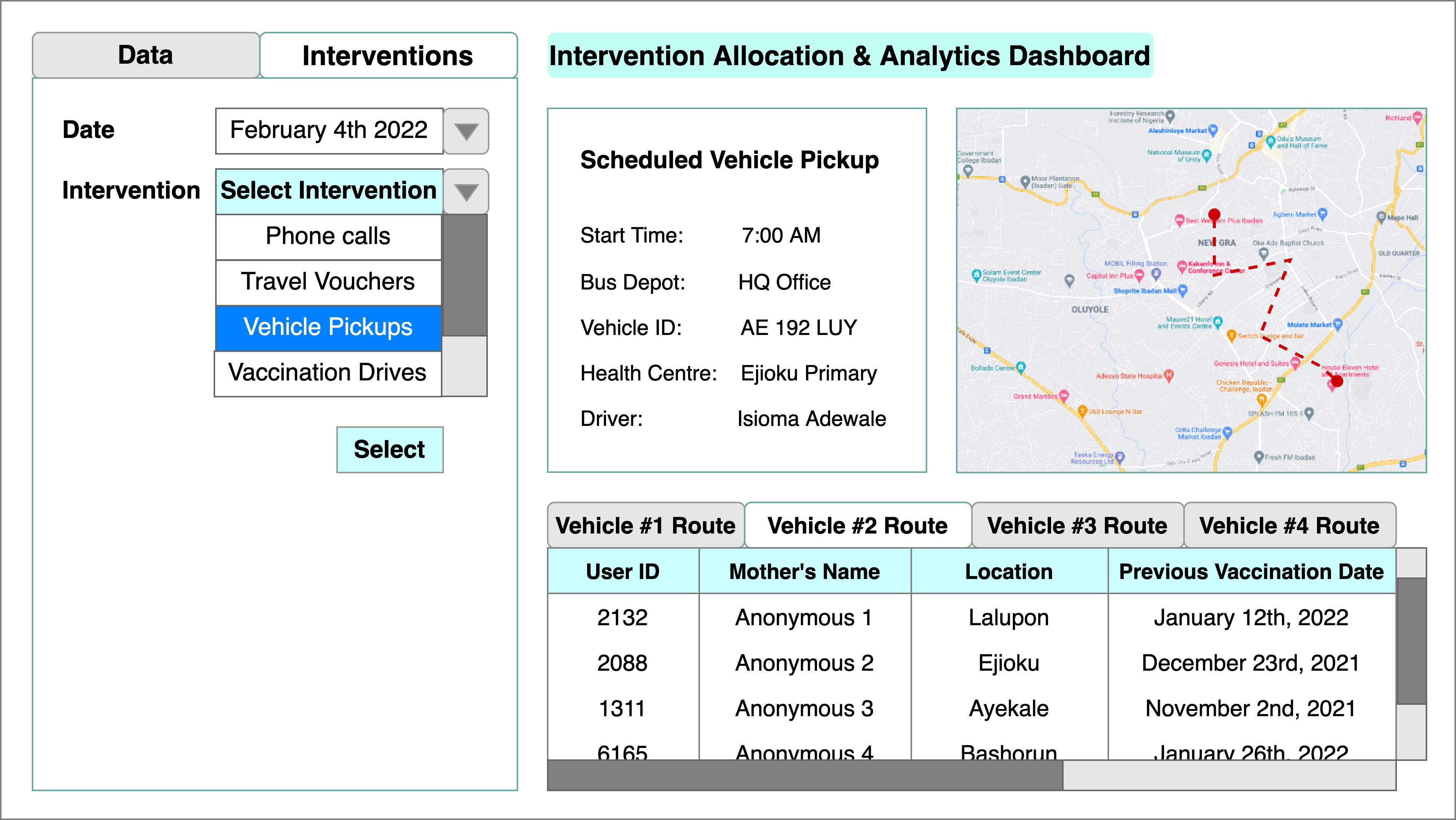}
%     \caption{Dashboard \textcolor{red}{Ayan: we need to remove the HelpMum logo from the image.}}
%     \label{fig:dashboard}
% \end{figure}
HelpMum works closely with several local and state governments in Nigeria. Even though it uses a vaccination tracking system to remind mothers of upcoming vaccination, it has proven to be ineffective in practice. As part of this project, four new interventions were designed to increase vaccination uptake in Nigeria. HelpMum (including its advisory board) and domain experts guided the design of each intervention. The interventions (described later in the paper) are geared towards increasing the awareness of vaccination, reminding mothers about upcoming vaccinations for their children, providing accessibility to vaccination centers by operating a pick-up and drop-off service, and conducting a door-to-door vaccination delivery program. However, matching the interventions to specific individuals presents a challenge---there exist far too many eligible recipients for the interventions as compared to the available resources. Indeed, the state in Nigeria where HelpMum is based has approximately 1.7 million children under five years of age. In contrast, HelpMum has only $4$ buses at their disposal for picking up mothers, and only a limited number of healthcare workers are available for conducting door-to-door interventions. Moreover, the interventions do not necessarily guarantee successful vaccination. For example, HelpMum has observed that, at times, parents fail to bring their children for immunization despite repeated phone calls about upcoming vaccination schedules. These domain-specific challenges require that the allocation of limited resources is optimized under uncertainty of the outcomes.

% \textbf{First}, we identify that it is imperative to reach out to mothers and educate them about vaccination programs and remind them about vaccination schedules. While HelpMum already performs this intervention through phone calls to all mothers whose children have upcoming vaccination, we seek to add a step of targeted phone calls to mothers who are likely not to present their children for vaccination. \textbf{Second}, HelpMum plans to operate small buses to pick up mothers and drive them to vaccination centers. \textbf{Third}, since there are a limited number of vans that HelpMum can use, it plans to provide travel vouchers to mothers to take their children to health centers for vaccination. \textbf{Fourth}, HelpMum is in the process of establishing a partnership with the local government to enable health workers to travel to neighborhoods with vaccines to perform a door-to-door vaccine delivery service.

We present a principled framework for optimizing the allocation of heterogeneous health interventions under uncertainty. Our approach, named \textbf{\ALG}: \textbf{A}I \textbf{D}riven \textbf{V}accination \textbf{I}ntervention Optimi\textbf{SER}, can guide non-profit organizations and government agencies to increase vaccination uptake in resource-constrained geographies, that are crucial to achieving key goals outlines in SDG 1 and SDG 3 of the United Nations~\cite{un_sdg}. Our approach is based on formulating an integer linear program (ILP) to maximize the cumulative probability of success of heterogeneous interventions under uncertainty. However, in our setting, it is infeasible to solve the ILP directly (even if the inputs were generated). In this paper, we tackle the challenges in a principled manner. 
% Specifically, we formulate an integer linear program (ILP) to maximize the cumulative probability of successful vaccination under domain-specific constraints. However, the ILP is non-trivial to solve due to several challenges. \textbf{First}, directly solving the ILP is infeasible in practice.
% % our formulation consists of over $1.16\times 10^{9}$ decision variables and $3.45\times 10^9$ constraints. 
% \textbf{Second}, the ILP must receive as inputs a set of feasible routes that the vans use based on operational constraints (e.g., constraints on pick-up and drop-off times).
% % constraints, e.g., HelpMum estimates that long waiting times at health centers mean that mothers must be dropped off before 11 a.m. 
% However, the set of feasible routes ($10^{200}$) makes route generation intractable in practice. \textbf{Third}, the probability of success for each intervention must be estimated before optimizing the allocation of resources. 
Specifically, we make the following contributions:
\begin{enumerate*}[label=\textbf{\arabic*)}]
    \item We present a formulation for optimizing the allocation of heterogeneous health resources under uncertainty.
    \item We present a heuristic approach to prune the decision space of the ILP by leveraging the structure of the problem. We also present a theoretical bound on the objective value attainable by the heuristic with respect to the optimal solution to the ILP.  
    \item We show how guided local search can be used to generate promising vehicle routes based on the probability of specific individuals \textit{requiring} the pickup service for vaccination.
    \item We estimate the success of interventions through historical data and community surveys.
    \item We test our algorithmic approach using the data collected by HelpMum. Experimental results demonstrate that the proposed approach significantly outperforms baseline approaches. 
    \item Finally, HelpMum is currently developing a pilot plan to deploy ADVISER in the largest city in West Africa. To the best of our knowledge, our solution would be the first AI-enabled program for increasing vaccination uptake in Nigeria.
\end{enumerate*}
\section{Related Work}

We discuss prior work related to combinatorial resource allocation under uncertainty for achieving health outcomes. The optimization of resources can be either done in a \textit{single-shot} manner or by considering the sequential nature of the decision-making problem. The specific paradigm for resource allocation depends on the specific problem domain. For example, the optimization of patient admissions in hospitals~\cite{hulshof2013tactical}, allocating home healthcare services~\cite{aiane2015new}, and redistribution of patients among hospitals in case of a surge in demand (e.g., in case of a pandemic)~\cite{parker2020optimal} have been modeled as single-shot optimization problems. Specifically, \citeauthor{parker2020optimal} solve the problem of finding optimal demand and resource transfers to minimize the surge capacity and resource shortage during a period of heightened demand due to COVID-19~\cite{parker2020optimal}. \citeauthor{aiane2015new} work on allocating services such as medical, paramedical, and social services delivered to patients in their homes modeled using an MILP formulation~\cite{aiane2015new}. Our problem setting and formulation is most similar to theirs in principle; but the problem setting considered by \citeauthor{aiane2015new} only accounts for travel times by resources as part of the objective and does not account for uncertainty in the outcomes after resource allocation~\cite{aiane2015new}. Moreover, their approach is not scalable to our setting; the number of decision variables and constraints in our problem is $10^5$ times higher. 

Prior work has also explored performing sequential decision-making in the context of resource allocation in healthcare settings. For example,  \citeauthor{mate2021field}~\cite{mate2021field} and \citeauthor{nishtala2021selective}~\cite{mate2021field} model the allocation of targeted phone calls as a restless multi-armed bandit problem (RMAB). In such an approach, historical data is used to estimate the effect of interventions (similar to our approach). Then, the RMAB model is used for planning interventions over multiple decision epochs with limited resources.\citeauthor{tsoukalas2015data} present a data-driven probabilistic framework for clinical decision support by using partially observable Markov decision processes (POMDP)~\cite{tsoukalas2015data}. The POMDP model is based on clinical practice, expert knowledge and data representations in emergency healthcare settings.

\section{Problem Formulation}\label{sec: problem form}

% We formulate the problem of optimizing the intervention allocation as an integer linear program (ILP).
\textbf{Problem Setting:} Our problem setting involves resource allocation to $M$ individuals (e.g., mothers) over $T$ days. We use $[M]$ and $[T]$ as shorthand for $[1,\dots ,M]$ and $[1,\dots ,T]$ respectively. While our goal is to ensure that children get vaccinated, mothers typically take their children for vaccination in our geographic area of interest. As a result, we say that the interventions are designed for mothers. We assume that each mother is eligible for an intervention for a fixed number of contiguous days within these $T$ days (depending on the last date when her child was vaccinated).
% Naturally, this period of eligibility varies across the mothers.
The binary variable $a_{mt}$ denotes whether mother $m\in [M]$ is eligible at time $t \in [T]$; $a_{mt}=1$ if and only if the mother is eligible on day $t$, and is $0$ otherwise. In order to get vaccinated, mothers can either travel to designated health centers, or healthcare officials can visit a mother's house. We divide the region of interest into a grid $G$ consisting of equally sized cells. Each mother's residence and each health center therefore map to unique cells in $G$. We use $d_{mg}$ to denote the distance of mother $m$'s residence from cell $g \in G$.

\noindent \textbf{Interventions:} In collaboration with HelpMum and domain experts, we design four new interventions:
\begin{enumerate*}[label=\textbf{\arabic*)}]
    \item \textit{Phone call}: A phone call is made to the mother reminding her about upcoming vaccination. We denote this intervention by $i_c$.
    \item \textit{Travel Voucher}: A travel voucher is provided to the mother to commute to vaccination centers. We denote this intervention by $i_t$. 
    \item \textit{Bus Pickup}: A bus can pick up a mother (and her child) from her residence and drop them at a vaccination center. Each bus has a capacity of $\gamma_{\ell}$ (for ease of exposition, we assume that $\gamma_{\ell}$ denotes the number of mothers that a bus can accommodate with their children).  
    We denote this intervention by $i_{\ell}$. $F$ denotes the set of buses.
    %Naturally, mothers living far away from the chosen route cannot be picked by such a bus.
    \item \textit{Vaccine Drive}: A health worker goes to a designated locality and vaccinates mothers (children) living nearby who are eligible for vaccination. Naturally, there is a cap on the number of vaccinations a health worker can provide in a day. We denote this cap by $\gamma_v$, and denote this intervention by $i_v$.
\end{enumerate*}\\
We use $I$ to denote the set of interventions, and 
for notational convenience, add no-intervention/empty-intervention, denoted as $i_n$, to this set. HelpMum considers $i_v$ to be highly effective in practice, followed by $i_b$, $i_t$ and $i_c$ (in decreasing order of effectiveness). Each intervention has a cost associated with it; we use $e_j$ to denote the cost associated with intervention $i_j \in I$. Naturally $e_n = 0$ (the cost of no-intervention is $0$), and $e_{\ell} > e_v \gg e_t > e_c$. In particular, employing a bus pickup or the cost of conducting a vaccine drive is relatively much more expensive than giving a travel voucher or making a phone call to a single mother. 
% Note that the costs do not represent the amount of money spent per mother; rather, they denote the total cost for an intervention.

\noindent \textbf{Outcomes:} Let $p_{mj}$ be the probability of mother $m$ taking her child for vaccination given intervention $i_j \in I$. 
% We describe the estimation of these probabilities in section~\ref{subsec: param est}.
%\textcolor{red}{We remark here that $p_{m,j}$ is different from the conditional probability, and is equal to the probability after making an \textit{intervention}, computed using
%the \textit{do-calculus}~\cite{Pearl16}. We discuss how these probabilities are estimated for our problem in Section \ref{subsec: param est}.}\vn{Do we say this explicitly?}
% Given these probabilities, we seek to maximize the cumulative probability of successful vaccination given a fixed budget $b$ by finding the optimal allocation of interventions $I$ among the $M$ mothers.
% , which is equivalent to finding an allocation of the interventions in $I$ among these $M$ mothers that maximizes the 
% total sum of probabilities of vaccine intake.
\noindent \textbf{Decision Variables:} Given grid $G$, mothers $M$, and a time horizon of $[T]$ days, we optimize over the allocation of interventions $I$. We use $g$ and $t$ to denote an arbitrary cell in $G$ and an arbitrary day in $[T]$ respectively. Let $x_{tg}$ be a binary variable that denotes the decision to conduct a vaccine drive, i.e., $x_{tg}$ is $1$ if and only if there is a vaccine drive at cell $g$ on day $t$, and $0$ otherwise. We point out that a vaccination drive at a cell does not necessarily target every mother in that cell. A healthcare official can only visit a fixed number of households, and our optimization formulation must optimize which mothers to target during a drive. If possible, the healthcare worker will travel to nearby cells as well.

Let $R_{f}$ denote the set of routes that a bus $f \in F$ can operate (we explain constraints specific to routes later; we first present our optimization formulation here for ease of exposition). We use a binary variable $q_{tfr}$ to denote the routes that are chosen for operation, i.e., $q_{tfr} = 1$ if bus $f$ operates on route $r\in R_{f}$ on day $t\in [T]$. Note that a specific route can only potentially target a subset of the mothers based on their locations. We use binary values $s_{mtfr}$ to denote whether mother $m$ can be picked up by a bus $f\in F$ operating on route $r \in R_f$ on day $t\in [T]$.

% the eligibility of mothers to routes; denoting whether mother $m$ can be picked up by bus $f$ on route $r$; $s_{m,f, r} = 1$ indicates mother $m$ can be picked by bus $f$ on route $r\in R_{f}$. 
We use additional $u, y,$ and $z$ variables to match specific interventions to each mother. The variable $y_{mtj} = 1$ if mother $m$ is given intervention $i_j\in \{i_n, i_c,i_t\}$ at time $t$.
For interventions $i_{\ell}$ and $i_v$, we have variables $u$ and $z$ such that:
a) $u_{mtfr} = 1$ if mother $m$ is picked up by bus $f$ employing route $r \in R_{f}$ on day $t\in [T]$, and
b) $z_{mtg} = 1$ if mother $m$ is targeted on day $t$ by a vaccination drive conducted at cell $g$.

\noindent \textbf{Objective Function:} Formally, we seek to optimize the following objective:
\small
\begin{align}
    &u^*, q^*, x^{*},y^{*},z^{*} =  \argmax_{u,x,y,z} \sum_{m \in [M]} \sum_{i_j \in \{i_n, i_c,i_t \}} y_{mtj} p_{mj} \label{eq:objective}\\
    &+ \sum_{ m \in [M]} \sum_{t \in [T]} \sum_{g \in [G]} z_{mtg} p_{mv}  + \sum_{ m \in [M]} \sum_{t \in [T]} \sum_{f \in [F]}\sum_{r\in R_f} u_{mtfr} p_{m\ell} \nonumber
\end{align}
\normalsize
We seek to maximize the cumulative probability of successful vaccination given a fixed overall budget $b$ by finding the optimal allocation of interventions $I$ among the $M$ mothers.

\noindent \textbf{Constraints:} We need to enforce the following constraints given our problem setting:

\noindent 1. \textit{Eligibility Constraints}: Each mother must be eligible for the vaccine when she is being targeted for an intervention. 
\small
    \[y_{mtj} \leq a_{m t} \;\;\; \forall m \in [M], t \in [T], i_j \in \{i_n, i_c, i_t\} \]
    \[z_{mtg} \leq a_{m t} \;\;\; \forall m \in [M], t \in [T], g \in [G]\]
    \[
    u_{mtfr} \leq a_{m t} \;\;\; \forall m \in [M], t \in [T], f \in [F], r\in R_f
    \]
\normalsize    
\noindent 2. \textit{Vaccine Drive Constraints}: a) If a mother is being targeted for a vaccination drive at a given location and time, there must exist such a drive, b) only mothers that live within distance $\sigma$ of a drive can be targeted for the drive, and c) at most $\gamma_v$ mothers can be targeted by a single drive. The last two constraints denote operational limitations of conducting door-to-door vaccination drives. 
\small
    \[z_{mtg} \leq x_{tg} \;\;\; \forall m \in [M], t \in [T], g \in [G]\]
    \[z_{mtg} \, d_{mg} \leq \sigma \;\;\; \forall m \in [M], t \in [T], g \in [G]\]
    \[\sum_{m \in [M]} z_{mtg}  \leq \gamma_v \;\;\; \forall t \in [T], g \in [G]\]
\normalsize    
%\noindent\textit{3. Location Constraint}: 
%\noindent\textit{4. Vaccine Drive Cap Constraint}: 
    
\noindent 3. \textit{Route Constraints}: If a mother is being being picked by a bus on a route on a particular day, then a) the mother should be eligible to be picked on that route,
b) the bus must employ that route on that day, and c) each bus can pick up at most $\gamma_{\ell}$ mothers, and d) in addition, given current resource limitations of our partner agency, we consider that a bus can only operate a single route on a given day.
\small
    \[u_{mtfr} \leq s_{mtfr} \;\;\; \forall m \in [M], t \in [T], f\in [F], r \in R_{f}\]
    \[u_{mtfr} \leq q_{tfr} \;\;\; \forall m \in [M], t \in [T], f\in [F], r \in R_{f}\]
    \[\sum_{m \in [M]} u_{mtfr} \leq \gamma_{\ell} \;\;\; \forall t \in [T], f\in [F], r \in R_{f}\]
    \[\sum_{r\in R_{f}} q_{tfr} \leq 1 \;\;\; \forall t \in [T], f\in [F]\]
\normalsize 
Note that each vehicle route must obey general routing constraints, e.g., there are restrictions on the earliest pick-up times and the latest drop-off times in our setting. We assume that all routes in $R_f$\,, $\forall f\in F$ obey these constraints (for now) to simplify the discussion (discussed in detail in section~\ref{subsec: method, vrp}).

\noindent 4. \textit{Intervention constraint}: We consider that each mother can be targeted for at most one intervention, i.e. for all $m \in [M]$,
\small
    \begin{align*}
     \sum_{t \in T}& \sum_{i_j \in \{i_c,i_t\}} y_{mtj} + \sum_{t \in T} \sum_{g \in G} z_{mtg} + 
     \sum_{t \in T} \sum_{f \in [F]} \sum_{r\in R_{f}} u_{mtfr} \leq 1
    \end{align*}
\normalsize    
\noindent\textit{5. Budget Constraint}: The total cost of the interventions can not exceed the monetary budget $b$ of the organization.
\small
  \begin{align*}
      \sum_{ m \in M}& \sum_{t \in T} \sum_{i_j \in \{i_c,i_t\}} y_{mtj} \cdot e_j + \sum_{t \in T} \sum_{g \in G} x_{tg} \cdot e_v \\
      & \sum_{t \in T} \sum_{f \in [F]} \sum_{r\in R_{f}} q_{tfr}\cdot e_{\ell}        ~~~~\leq b
  \end{align*}
\normalsize

\subsection{Routing Formulation}\label{subsec: method, vrp}

We formulate a vehicle routing problem with time windows (VRPTW)~\cite{toth2002vehicle} to schedule vehicles to pick up mothers (and their children) and take them to a vaccination center. Vehicle routing problems can be static, where all inputs are received
before optimizing routes, or dynamic, where inputs are updated concurrently with the determination of the route~\cite{pillac2013review,wilbur2022online}. We consider a static VRP; the set of mothers whose children need vaccination on a given day is known before routes are optimized. In practice, the mothers need to be taken to the health centers and dropped back to their resp. residences. However, we only discuss routing to the health centers to simplify the discussion.
% the intervention dictates that mothers are dropped back to their pickup points as well. However, a) our optimization formulation only requires the cumulative probability of success given the beneficiaries, and b) routing to the health centers is more challenging due to constraints on the latest drop-off time. As a result, it suffices to consider the routing problem to the vaccination centers. 
%Dropping mothers to their homes is trivial as we can assume the vehicles take the same routes in reverse. 
% Since a bus can take the same route back, the problem simplifies to picking up a mother within their time window and dropping the mother at a vaccination site within the vaccination site's time window. %We denote the list of vaccination sites 
All vehicles begin operation from fixed spots (parking locations rented by HelpMum) called \textit{depots}. Note that on day $t$, only a subset of mothers are eligible for vaccination, i.e., $a_{mt} = 1$.
%assigned intervention $i_v$ (vaccine drive) which is $M_{t}=\{m \in M, k \in T, i_j \in I \mid i_j=v, a(i,k)=1, k=t\}$. 
Let $\beta_{e}(m)$ and $\beta_{l}(m)$ denote the earliest and latest times on which mother $m$ can be picked up. The times vary across the population based on occupation and other beneficiary specific constraints. Let the set of vaccination centers operating on day $t$ day be $S_{t}$. HelpMum requires that mothers are dropped off at a vaccination center early so that there is sufficient time for them to get their children vaccinated. Let the earliest and latest drop-off times for a vaccination centre $s \in S_t$ be denoted $\beta_{e}(s)$ and $\beta_{l}(s)$ respectively. The set of pick-up locations (mothers' residences) and drop-off locations (vaccination centers) represent the nodes ($N$) of a graph with the road network being the edges.

% Lastly, there are a set of vehicles available to pickup the mothers ($F_t$). The set of vehicles is a subset of the total vehicles, therefore $F_t \subset F$.
% We use $N$ to denote the entire set of locations, the pickup location of each mother and the location of the vaccination sites.

%The set of all possible locations in the operating region is represented by a graph $G=(N,E)$ where $N$ denotes the set of vertices (i.e. locations) and $E$ denotes the set of edges (i.e. roadways between locations) in the graph. Each edge is weighted by the travel time between two locations in the graph. Therefore the cost of travelling between two nodes is the shortest path travel time between those two nodes. 

A route plan is denoted by an ordered sequence of nodes $\theta = \{n_1, n_2, ...\}$, where $n_j \in N$ is an arbitrary node that the vehicle needs to visit en-route. We attach the following information with each node in a route plan. First, $\beta_{\theta,e}(n_k)$ and $\beta_{\theta,l}(n_k)$ is the earliest and latest the vehicle can arrive at location $n_k$, and is set to $\beta_{e}(m)$ and $\beta_{\theta,l}(n_k)$ respectively, when the node corresponds to a pickup location for a mother $m\in M$, or $\beta_{e}(s)$ and $\beta_{l}(s)$ respectively when the node corresponds to a vaccination site $s\in S_t$. 
%Similarly, $\beta_{\theta,l}(n_k)$ is the latest the vehicle can arrive at the location $n_k$ which is set to $\beta_{l}(m)$ when it corresponds to a pickup location for a mother $m\in M_t$, or $\beta_{l}(s)$  when it corresponds to a vaccination site $s\in S_t$. 
Second, the scheduled arrival time for the vehicle servicing route plan $\theta$ to location $n_k$ is $\delta_{\theta}(n_k)$.
%Also, $w(\theta, n_k)$ is a binary variable that is 1 if location $n_k$ is a mother pickup location and $0$ otherwise.
A route plan is feasible if all time window constraints are satisfied and all mothers who are picked up (up to a maximum capacity of each vehicle) are dropped off at a vaccination site. The time window constraints are satisfied for each location if $\beta_{\theta,e}(n_k) \leq \delta_{\theta}(n_k) \leq \beta_{\theta,l}(n_k)$, $\forall n_k \in \theta$. For each vehicle $f \in F$, the set of feasible routes contain all routes that obey all the routing and capacity constraints.

% The second constraint is satisfied when the last location is a vaccination site ($w(\theta, n_{|\theta|})=0$)
%\Ayan{What does $w(\theta, n_{|\theta|})=0$ mean? What is $w$? Let us remove notation if we do not need it. There is a lot of notation in this paper.}
%%%% APPROACH %%%%
\section{Approach}\label{sec: approach}

We face three specific challenges in directly solving optimization problem (~\ref{eq:objective}). First, the ILP consists of more than $10^9$ decision variables and constraints. Second, we must generate the set of feasible routes as an input to the ILP. In our problem setting, the number of routes exceed $10^{200}$. Third, we must estimate the success of each intervention for each mother. We tackle these challenges in a principled manner below. 
% In order to tackle these challenges, we propose a principled heuristic approach called \ALG\ (AI Driven Vaccination Intervention Optimiser) (see Figure~\ref{fig:workflow}). We assume that the probabilities $p_{mj}$ and are known for each mother and intervention for now. We describe the estimation of the probabilities in section~\ref{subsec: param est}.
Our approach (shown in Figure~\ref{fig:helpmumAndWorkflow} (b)) is based on pruning the search space of the decision variables by greedily conducting the most \textit{efficient} intervention; specifically, we greedily use the given budget to conduct the intervention that has the highest success-to-cost ratio. In this case, conducting a vaccination drive is relatively more expensive than making a phone call, but it enables HelpMum to target more mothers and guarantee more successful vaccinations. 
% We describe our greedy approach in Sec. \toref\ and present bounds for the loss with respect to solving the ILP optimally.
After the greedy allocation of the vaccination drives we use guided local search to generate promising vehicle routes, which is fed to the ILP as an input. 

\begin{algorithm}[h]
\caption{The ADVISER Framework}
\label{alg:MAIN ILP}
\textbf{Input} $M$: set of mothers, $a$: availability matrix, $d$ distance matrix, $T$: number of days in program, $G$: grid, $b$: budget, $r$: radius \\
\textbf{Output} $I$: intervention alloc. array
\begin{algorithmic}[1] %[1] enables line numbers
\STATE $M', C, b' \leftarrow \text{Heuristic}(M,a,d,G,b, r)$ \\\textcolor{gray}{/* Calls the Heuristic algorithm (Algorithm \ref{alg: Heuristic}) which returns $M'$: the remaining set of mothers, $C$: the vaccine drive allocation matrix indicating where the vaccine drives will be conducted, and the left over budget $b'$. */}
\STATE $I(m) \leftarrow \text{Vaccine Drive}$, for $m \in M\setminus M'$\\ \textcolor{gray}{/* Assigns the vaccine drive intervention to all mothers in $M$ but not in $M'$. */}\\
\STATE $R \leftarrow VRP(M', \text{Time windows})$ \\ \textcolor{gray}{/* Calls the Vehicle Routing Algorithm (Sections \ref{subsec: method, vrp}, \ref{sec: approach}) with inputs $M'$, time windows for each mother, and immunization centres. It returns $R$: the set of optimal routes for each
(bus depot, centre) pair on each day. */}
\STATE $I(M') \leftarrow \text{ILP}(M', R, a, d, C, G, b', r)$ \\ \textcolor{gray}{/* Calls the ILP (Section \ref{sec: problem form}) on the remaining mothers $M'$ with routes $R$, leftover budget $b'$, and other relevant parameters. The vaccine drive allocation matrix $C$ helps the ILP ensure that there are no vaccination drives at grid $g \in G$ on day $t$, if the Heuristic has already decided to conduct a drive there on that day. The ILP returns the optimal intervention allocation on remaining mothers with budget $b'$. */}
\STATE Return the intervention allocation array $I$.
\end{algorithmic}
\end{algorithm}

\subsection{Greedy Pruning} We assume that the probabilities $p_{mj}$ are known for each mother and intervention. We describe how such parameters can be estimated later. We use an iterative approach for pruning the size of the ILP. At each step, a set of mothers are chosen for intervention and removed from consideration. 
%In the beginning of $w$-th iteration, let $M_w \leq M$ denote the number of mothers left for intervention. Without loss of generality, 
Let $[M_w]$ denote the set of mothers at the beginning of iteration $w$. Let $H^w$ denote a matrix of size $G\times T$, where each entry in the matrix (denoted by $H^{w}_{gt}$) captures the utility of conducting a vaccination drive at cell $g \in [G]$ and time $t \in [T]$. At iteration $w$, let $g^*_w$ on day $t^*_w$ denote the optimal cell and day to conduct an intervention, given $[M_w]$. The matrix $H^w$ is used to choose cell-time combinations to conduct vaccination drives in each iteration. The cell-time positions chosen in the previous iterations are updated to $-1$ in the matrix to remove them from consideration in future iterations, i.e., $g^*_{w'}, t^*_{w'}$ for all $w' < w$ is set to $-1$. We point out that conducting a vaccination drive at a cell can potentially target mothers from nearby cells as well, depending on the number of households in consideration and the manner in which door-to-door vaccine delivery is done. Let $M_{gt}$  denote a subset of mothers in $[M_w]$ who live within some exogenously specified distance $\sigma$ of cell $g$ and are eligible for vaccination at time $t$. $H^w$ can then be computed as:
\small
\[
    H^w_{gt} =  
\begin{cases}
    -1 & \text{if } g=g^*_{w'} \wedge t=t^*_{w'} \ \forall w'<w\\
    U^{w}(g,t)            & \text{otherwise}
\end{cases}
\]
\normalsize
where $U^{w}(g,t)=\max_{S\subseteq M_{gt}, |S|\leq \gamma_v}\sum_{m\in S} (p_{mv} - p_{mn})$ denotes the utility of conducting a vaccine drive at cell $g$ at time $t$ on iteration $w$ by targeting the subset of mothers who provide the most \textit{gain} over no interventions. Note that the utility depends on $w$, because mothers who are targeted for intervention in an iteration are removed from consideration on the next iteration. As an example, conducting a vaccination drive at the same location on two successive days won't be ideal, if every mother eligible for vaccination is already benefited by the service on the first day. Then, $g^*_{w}, t^*_{w} = \argmax_{g,t} H^{w}_{g,t}$. In our formulation, $p_{mv} =1$. We denote the mothers targeted as part of the conducting a drive at $g^*_{w}, t^*_{w}$ as $S^{w}_{g^* t^*} = \argmax_{S\subseteq M_{g^*t^*}, |S|\leq \gamma_v}\sum_{m\in S} (p_{mv} - p_{mn})$.

We drop references to $w$ to simplify the discussion. In the proposed heuristic, we decide to conduct a vaccine drive at $g^*, t^*$ if 
%\[
$e_t\cdot |S_{g^* t^*}| \geq e_v$, i.e.,
%\]
if the cost of conducting a vaccine drive is at most the cost of giving travel vouchers to the mothers being benefited by the drive. Note that conducting a vaccine drive is a better intervention than providing travel vouchers even when they cost the same (as children are guaranteed to be vaccinated through the former strategy). However, our pruning strategy is \textit{lazy}; we commit to conducting a drive only if the aforementioned condition is satisfied and leave other decisions for the pruned ILP. The mothers who are mapped to vaccine drives are removed from consideration at the next iteration and the budget is updated accordingly. We stop pruning after we are left with some exogenously specified budget parameter ($b'$). We present the pseudo codes for the ADVISER framework in Algorithm~\ref{alg:MAIN ILP} and the heuristic pruning procedure that ADVISER uses in Algorithm~\ref{alg: Heuristic}.

\begin{algorithm}[h]
\caption{Heuristic Pruning}
\label{alg: Heuristic}
\textbf{Input} $M$: set of mothers, $a$: availability matrix, $d$: distance matrix, $T$: number of days in program, $G$: grid, $b$: budget, $r$: radius \\
\textbf{Output} $M'$: set of remaining mothers, $C$: vaccine drive allocation matrix, $b'$: leftover budget 
\begin{algorithmic}[1] %[1] enables line numbers
\STATE Let $C, K, H$ be matrices of size $|G|\times T$. \\
\textcolor{gray}{/* $C_{gt} = 1$ if the Heuristic decides to conduct a vaccine drive at grid $g$ on day $t$ and otherwise $0$, $K_{gt} = 1$ if at step 10 $g, t$ is computed as $g^*, t^*$ and $0$ otherwise, $H_{gt}$ is the value of computing the vaccine drive at cell $g,t$ (refer Section \ref{sec: approach}) */}
\STATE Initialize: $C_{gt} = 0$, $H_{gt} = 0$, and $K_{gt} = 0$ for all $g\in G$ and $t \in [T]$
\STATE Set Count $\leftarrow 0$
%\at{shudnt this be while b>=b'}
\WHILE{$b \geq b'$ and $\text{count} \leq |G|\times T$}
\STATE Set Count $\leftarrow$ Count + 1
\FOR{$g\in [|G|]$ and $t\in [T]$} 
\STATE $M_{gt} \leftarrow \{m\in M \mid a_{mt} = 1, d_{mg} \leq r \}$
\IF{$K_{gt} = 1$}
\STATE $H_{gt} \leftarrow -1$
\ELSIF{$K_{gt} = 0$}
\STATE $H_{gt} = \max_{S\subseteq M_{gt}, |S|\leq \gamma_v}\sum_{m\in S} (p_{mv} - p_{mn})$\\
\ENDIF
\ENDFOR
\STATE $g^*, t^* = \argmax_{g,t} H_{gt}$
\STATE $S_{g^* t^*} = \argmax_{S\subseteq M_{g^*t^*}, |S|\leq \gamma_v}\sum_{m\in S} (p_{mv} - p_{mn})$
\STATE $K_{g^* t^*} \leftarrow 1$
\IF{$e_t\cdot |S_{g^* t^*}| \geq e_v$}
\STATE $C_{g^* t^*} \leftarrow 1$
\STATE $M \leftarrow M\setminus S_{g^* t^*}$
\STATE $b \leftarrow b-  e_v$
\ENDIF
\ENDWHILE
\STATE $b' \leftarrow b$
\STATE Return $M'$, $C$, $b'$
\end{algorithmic}
\end{algorithm}

\textbf{Performance Bounds:} Let $k$ denote the number of vaccination drives determined in the greedy pruning phase, and $M_{VH}$ denote the set of mothers targeted by these $k$ vaccination drives. We bound the loss incurred through greedy pruning relative to the optimal solution of the ILP. We assume that the optimal solution has at least $k$ vaccination drives of size at least $e_v/e_t$; we verify this empirically in multiple parameter settings. We arbitrarily choose $k$ vaccine drives from such an optimal solution, and denote by $M_{VI}$ the set of mothers targeted by these $k$ vaccine drives. Also, let mother $m\in M$ be given intervention $i_m^*$ by such an optimal solution. 
We begin by proving the following proposition, which is an outcome of our greedy choice made at every iteration during pruning. 
%Proposition \ref{prop: objective for greedy phase} is used to 
% prove Theorem \ref{thm: lower bound loss} below.
\begin{restatable}{proposition}{objgreedy}\label{prop: objective for greedy phase}
$\sum_{m \in M_{VH}} (p_{mv} - p_{mn}) \geq \sum_{m \in M_{VI}} (p_{mv} - p_{mn})$
\end{restatable}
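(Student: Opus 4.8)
The plan is to compare the greedy pruning procedure against the $k$ vaccine drives extracted from the optimal solution by exploiting the fact that, at each iteration, the greedy choice maximizes the marginal gain $U^w(g,t)$ over all cell-time pairs. The key observation is that the $k$ drives selected from the optimal solution each target a set of mothers (of size at least $e_v/e_t$, by assumption) living within distance $\sigma$ of the drive cell, and each such drive is therefore a \emph{feasible candidate} for the greedy algorithm at every iteration in which its mothers have not yet been claimed.

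First I would set up an exchange/charging argument. Order the greedy drives as $(g^*_1,t^*_1),\dots,(g^*_k,t^*_k)$ in the order they are committed, with associated mother sets $S_1,\dots,S_k$ (disjoint, since claimed mothers are removed), so that $M_{VH}=\bigcup_{w=1}^k S_w$. Similarly write the $k$ optimal drives with mother sets $T_1,\dots,T_k$, $M_{VI}=\bigcup_{j=1}^k T_j$ (these may be taken disjoint as well, since a mother is targeted by at most one intervention). For each $w$, at the start of iteration $w$ only the mothers in $S_1\cup\dots\cup S_{w-1}$ have been removed, so at least $k-(w-1)$ of the optimal drives still have \emph{all} their mothers available — in fact, by a counting argument, at least one optimal drive $T_{j(w)}$ has its entire mother set still available (since $w-1 < k$ drives removed can intersect at most $w-1$ of the $k$ optimal drives). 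That optimal drive, restricted to the at-most-$\gamma_v$ highest-gain mothers, is a feasible choice at iteration $w$, so greedy's gain satisfies $\sum_{m\in S_w}(p_{mv}-p_{mn}) \ge \sum_{m\in T_{j(w)}}(p_{mv}-p_{mn})$ (using $|T_{j(w)}|\le\gamma_v$, which holds because the optimal solution respects the $\gamma_v$ cap).

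Then I would sum over $w=1,\dots,k$. The left side is exactly $\sum_{m\in M_{VH}}(p_{mv}-p_{mn})$. For the right side I need the map $w\mapsto j(w)$ to be a bijection (or at least injective) onto $\{1,\dots,k\}$ so that the sum telescopes to $\sum_{m\in M_{VI}}(p_{mv}-p_{mn})$; this requires choosing $j(w)$ consistently — e.g., greedily assign to iteration $w$ the still-fully-available optimal drive of largest gain that has not yet been assigned, and argue by induction that such a drive always exists because at most $w-1$ optimal drives can be "blocked" (partially claimed) or "already assigned" after $w-1$ steps. Combining, $\sum_{m\in M_{VH}}(p_{mv}-p_{mn}) \ge \sum_{j=1}^k \sum_{m\in T_j}(p_{mv}-p_{mn}) = \sum_{m\in M_{VI}}(p_{mv}-p_{mn})$, which is the claim.

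The main obstacle is making the bijection argument airtight: I must rule out the bad case where several greedy drives all "use up" the availability of the same optimal drive while other optimal drives get skipped, and I must ensure that when I charge iteration $w$ to optimal drive $T_{j(w)}$, that drive really is intact at iteration $w$ (not merely intact at the moment greedy would have picked it). The cleanest route is probably a Hall-type / pigeonhole argument on the bipartite "availability" relation between the $k$ greedy iterations and the $k$ optimal drives, combined with the monotonicity that once an optimal drive is blocked it stays blocked; I would also need to double-check the boundary assumption that each of the $k$ optimal drives has size at least $e_v/e_t$ is actually used only to guarantee there \emph{are} $k$ such drives to extract, not in the inequality chain itself. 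A subtle point to verify is that greedy never passes over a drive it "should" have taken in a way that breaks the counting — but since at iteration $w$ greedy picks the global argmax of $H^w$, and the candidate optimal drive is always one of the options, the per-step inequality is safe; only the aggregation needs care.
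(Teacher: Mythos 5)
Your per-step charging argument has a genuine gap at the counting step. You justify the existence of an intact optimal drive at iteration $w$ by claiming that the $w-1$ greedy drives already committed ``can intersect at most $w-1$ of the $k$ optimal drives.'' That is not true: a single greedy drive selects up to $\gamma_v$ mothers from everyone living within distance $\sigma$ of its cell who is eligible on its day, and these coverage neighborhoods overlap, so one greedy drive can simultaneously take mothers from many different optimal drives (a mother can be within $\sigma$ of several cells and eligible on several days). Hence after $w-1$ greedy steps, far more than $w-1$ optimal drives may be partially depleted, the pigeonhole/Hall argument does not yield an injective map $w \mapsto j(w)$ onto intact, not-yet-charged optimal drives, and in fact no such per-step charging need exist: if disjoint optimal drives $T_1$ and $T_2$, each of gain $10$, both lie within $\sigma$ of some intermediate cell, greedy may legitimately (as an argmax tie) first run a drive at that cell containing half of each, for gain $10$; its second drive then has gain only $5$ and cannot dominate whichever of $T_1,T_2$ it is charged to. Note also that the failure mode is the opposite of the one you flagged: the danger is one greedy drive blocking several optimal drives, not several greedy drives exhausting one. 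A second, smaller, unaddressed point is that the heuristic sets $K_{gt}=1$ and $H_{gt}=-1$ for a cell-time it examines even when it declines to commit a drive there, so ``all mothers of $T_{j(w)}$ are still available'' does not by itself guarantee that the corresponding cell-time is still a feasible greedy candidate at iteration $w$.

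For comparison, the paper does not attempt an injective per-drive matching at all: it orders the $k$ optimal drives arbitrarily and asserts, directly from the argmax property of the greedy choice, the cumulative prefix domination $\sum_{m \in M_{VH}^{(w)}}(p_{mv}-p_{mn}) \geq \sum_{m \in M_{VI}^{(w)}}(p_{mv}-p_{mn})$ for every $w$, then takes $w=k$. So the inequality the paper uses compares unions of the first $w$ greedy drives against unions of any $w$ optimal drives, rather than matching drives one-to-one; your proposed step-by-step charging is strictly stronger than what is needed, and the overlap example above shows it cannot be recovered by the counting you propose. Your instinct that the intactness/overlap issue is the crux is sound --- it is precisely the point that the paper's one-line justification of the prefix inequality passes over --- but as written your route does not close the argument.
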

\begin{proof}
Let $v_{wH}$ be the vaccine drive determined by the heuristic procedure at the $w$-th iteration for $w \in [1,k]$. Also, arbitrarily order the $k$ vaccine drives that the mothers in $M_{VI}$ are part of. In particular, let $v_{wI}$ be the $w$-th vaccine drive of such a chosen order among the $k$ vaccine drives that are part of the optimal ILP solution, for $w\in [1,k]$. Further, let $M_{VH}^{(w)}$ be the set of mothers targeted by the $w$ vaccination drives $v_{1H}, \ldots, v_{wH}$. Similarly, let $M_{VI}^{(w)}$ be the set of mothers targeted by the $w$ vaccination drives $v_{1I}, \ldots, v_{wI}$. Since at every iteration $w' \leq w$ in the greedy pruning we choose to have a vaccine drive at grid $g^*_{w'}$ and time $t^*_{w'}$ with the highest value of $H^{w'}_{gt}$, we have for every $w \in [1,k]$
$$\sum_{m \in M_{VH}^{(w)}} p_{mv} - p_{mn} \geq \sum_{m \in M_{VI}^{(w)}} p_{mv} - p_{mn} \ .$$
Finally, it is easily seen that $M_{VH}^{(k)} = M_{VH}$ and $M_{VI}^{(k)} = M_{VI}$, and as the above equation holds for $w =k$, this completes the proof of the proposition.
\end{proof}

\noindent Let $M_{VH} \setminus M_{VI}$ (resp. $M_{VI} \setminus M_{VH}$) denote the set of mothers in $M_{VH}$ (resp. $M_{VI}$) but not in $M_{VI}$ (resp. $M_{VH}$). We use Proposition 1 to prove the following theorem.
%In the following theorem we determine how far the objective value derived from our heuristic procedure is from the optimal objective value. 
\begin{restatable}{theorem}{lowerbound}\label{thm: lower bound loss}
Let $O_H$ be the objective value of the solution derived from our heuristic procedure and $O^*$ be the objective value of the optimal ILP solution. Then $O_H \geq O^* - (\sum_{m \in M_{VH}\setminus M_{VI}} (p_{m i_m^*} - p_{mn}))$.
\end{restatable}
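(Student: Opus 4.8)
The plan is to lower-bound $O_H$ by exhibiting one feasible solution of the pruned problem that ADVISER actually solves, and then compare that bound with $O^*$ term by term, closing the remaining gap with Proposition~\ref{prop: objective for greedy phase}. Since ADVISER fixes the $k$ greedy drives, which contribute $\sum_{m \in M_{VH}} p_{mv}$ to the objective, and afterwards solves the ILP optimally on the remaining mothers $M' = M \setminus M_{VH}$ with the leftover budget $b'$, we get $O_H \ge \sum_{m \in M_{VH}} p_{mv} + V$ for the value $V$ of \emph{any} feasible solution of the pruned ILP on $M'$.

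For $V$ I would take the optimal ILP solution, delete the $k$ selected drives that cover $M_{VI}$ (which returns exactly $k e_v$ units of budget, precisely what the heuristic spent on its own $k$ drives, so what remains still fits inside $b'$), reassign every mother in $M_{VI} \setminus M_{VH}$ to $i_n$, discard every mother in $M_{VH}$ since she is no longer part of the pruned instance, and keep the intervention $i_m^*$ for each remaining mother $m \in M' \setminus M_{VI}$. This gives $V \ge \sum_{m \in M' \setminus M_{VI}} p_{m i_m^*} + \sum_{m \in M_{VI} \setminus M_{VH}} p_{mn}$, and hence
\[
O_H \;\ge\; \sum_{m \in M_{VH}} p_{mv} \;+\; \sum_{m \in M' \setminus M_{VI}} p_{m i_m^*} \;+\; \sum_{m \in M_{VI}\setminus M_{VH}} p_{mn}.
\]

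Next I would expand $O^*$. Using that $i_m^* = i_v$ (so $p_{m i_m^*} = p_{mv}$) for every $m \in M_{VI}$, together with the disjoint split $M \setminus M_{VI} = (M_{VH}\setminus M_{VI}) \cup (M' \setminus M_{VI})$, we get $O^* = \sum_{m \in M_{VI}} p_{mv} + \sum_{m \in M'\setminus M_{VI}} p_{m i_m^*} + \sum_{m \in M_{VH}\setminus M_{VI}} p_{m i_m^*}$. Subtracting, the $M'\setminus M_{VI}$ terms cancel, and rewriting $\sum_{m\in M_{VH}} p_{mv} - \sum_{m\in M_{VI}} p_{mv} = \sum_{m\in M_{VH}\setminus M_{VI}} p_{mv} - \sum_{m\in M_{VI}\setminus M_{VH}} p_{mv}$ leaves
\[
O_H - O^* \;\ge\; \sum_{m\in M_{VH}\setminus M_{VI}} (p_{mv} - p_{m i_m^*}) \;-\; \sum_{m\in M_{VI}\setminus M_{VH}} (p_{mv} - p_{mn}).
\]
Proposition~\ref{prop: objective for greedy phase}, after cancelling the common $M_{VH}\cap M_{VI}$ contribution from both sides, yields $\sum_{m\in M_{VH}\setminus M_{VI}}(p_{mv}-p_{mn}) \ge \sum_{m\in M_{VI}\setminus M_{VH}}(p_{mv}-p_{mn})$; plugging this into the previous line collapses the right-hand side to $\sum_{m\in M_{VH}\setminus M_{VI}}(p_{mn}-p_{m i_m^*}) = -\sum_{m\in M_{VH}\setminus M_{VI}}(p_{m i_m^*}-p_{mn})$, which is exactly the asserted inequality.

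The step I expect to be the real obstacle is verifying that the surgically modified point is genuinely feasible for the pruned ILP. Budget is straightforward, as argued above. The delicate cases are: (i) a mother $m \in M' \setminus M_{VI}$ with $i_m^* = i_\ell$ needs a bus route that may not belong to the guided-local-search route set $R$ passed to the ILP; (ii) a mother with $i_m^* = i_v$ that lies outside the $k$ chosen drives depends on one of the optimal solution's other drives, which may sit at a cell--day pair already claimed by the heuristic, where the pruned ILP disallows a second drive; and (iii) the per-drive cap $\gamma_v$ and the one-route-per-bus constraint must survive the deletions and reassignments. I would discharge these by leaning on the theorem's standing assumption that the optimal solution contains $k$ drives of size at least $e_v/e_t$ that can be aligned with the greedy ones, by restricting attention to interventions other than the bus pickup, and by replacing any intervention that cannot be replicated with $i_n$ and absorbing the slack into the same loss term; but pinning this down cleanly is where the argument needs the most care.
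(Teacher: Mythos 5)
Your proposal is correct and follows essentially the same route as the paper: fix the $k$ greedy drives, exhibit the surgically modified optimum (no intervention for $M_{VI}\setminus M_{VH}$, keep $i_m^*$ on $M\setminus(M_{VH}\cup M_{VI})$) as a feasible solution of the pruned ILP to get the same lower bound on $O_H$, and then close the gap with Proposition~\ref{prop: objective for greedy phase}. The feasibility subtleties you flag (route availability, cell--day collisions, capacity caps) are not treated in the paper either---its proof justifies feasibility only through the budget argument that the optimum contains at least $k$ drives---so on that point you are, if anything, more careful than the original.
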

\begin{proof}
From Proposition \ref{prop: objective for greedy phase}, we have
\begin{align*}
    \sum_{m \in M_{VH}} p_{mv} - p_{mn} &\geq \sum_{m \in M_{VI}} p_{mv} - p_{mn} 
\end{align*}
Rearranging the above expression we have
\begin{align*}
    \sum_{m \in M_{VH}} p_{mv} + \sum_{m \in M_{VI}\setminus M_{VH}} p_{mn}  \geq \\ \sum_{m \in M_{VI}} p_{mv} +  \sum_{m \in M_{VH}\setminus M_{VI}} p_{mn}
\end{align*}
We add $\sum_{m \in M\setminus M_{VI}} p_{mi_m^*} $ to both sides of the above expression.
\begin{align*}
    \sum_{m \in M_{VH}} p_{mv} + \sum_{m \in M_{VI}\setminus M_{VH}} p_{mn} \ + \sum_{m \in M\setminus M_{VI}} p_{mi_m^*}  \geq \\ \sum_{m \in M_{VI}} p_{mv} +  \sum_{m \in M_{VH}\setminus M_{VI}} p_{mn} +\sum_{m \in M\setminus M_{VI}} p_{mi_m^*} 
\end{align*}
Now observe that $O^* = \sum_{m \in M_{VI}} p_{mv} +\sum_{m \in M\setminus M_{VI}} p_{mi_m^*}$, and $M \setminus M_{VI} = (M_{VH}\setminus M_{VI}) \ \uplus \ (M\setminus (M_{VH}\cup M_{VI}))$. Hence, substituting for $O^*$ in RHS  and partitioning $M\setminus M_{VI}$ in the LHS of the above expression we have
\begin{align*}
    &\sum_{m \in M_{VH}} p_{mv} + \sum_{m \in M_{VI}\setminus M_{VH}} p_{mn} + \sum_{m \in M_{VH}\setminus M_{VI}} p_{mi_m^*} \\ &+ \sum_{m \in M\setminus (M_{VH}\cup M_{VI})} p_{mi_m^*}  \, \, \geq \,  O^* + \sum_{m \in M_{VH}\setminus M_{VI}} p_{mn} 
\end{align*}
Rearranging the above expression, we have
\begin{align}\label{equation: thm 1 the last inequality}
    &\sum_{m \in M_{VH}} p_{mv} + \sum_{m \in M_{VI}\setminus M_{VH}} p_{mn} + \sum_{m \in M\setminus  (M_{VH}\cup M_{VI})}  p_{mi_m^*}  \nonumber \\ & \geq  O^* - (\sum_{m \in M_{VH}\setminus M_{VI}} p_{mi_m^*} - p_{mn})
\end{align}
%\normalsize
It is easy to see that $M \setminus M_{VH} = (M_{VI}\setminus M_{VH}) \ \uplus \ (M\setminus (M_{VH}\cup M_{VI}))$, and recall that ADVISER finally runs the ILP on $M\setminus M_{VH}$ mothers with budget $b-k\cdot e_v$. Now since the number of vaccine drives in the optimal solution is at least $k$, the cost of providing interventions $i_m^*$ to mothers $m\in  M\setminus (M_{VH}\cup M_{VI})$ is at most $b - k\cdot e_v$.
%(\textcolor{red}{it is here that we require the assumption that the optimal solution has at least $k$ vaccine drives}). 
Hence, providing no interventions to mothers in $M_{VI}\setminus M_{VH}$ and intervention $i_m^*$ to mother $m \in M\setminus (M_{VH}\cup M_{VI})$ is a feasible solution of the ILP run on the remaining mothers. This implies the ILP on the remaining mothers returns an intervention allocation on $M\setminus M_{VH}$ which has objective value at least $\sum_{m \in M_{VI}\setminus M_{VH}} p_{mn} + \sum_{m \in M\setminus (M_{VH}\cup M_{VI})} p_{mi_m*}$. Hence the objective value of the heuristic procedure is

\begin{align}\label{equation: lower bound on O_H}
 &O_H \geq \sum_{m \in M_{VH}} p_{mv} + \sum_{m \in M_{VI}\setminus M_{VH}} p_{mn} \nonumber \\& + \sum_{m \in M\setminus (M_{VH}\cup M_{VI})} p_{mi_m*}   
\end{align}
Finally, using Equation \ref{equation: lower bound on O_H} in Equation \ref{equation: thm 1 the last inequality} we have
$$O_H \geq O^* - (\sum_{m \in M_{VH}\setminus M_{VI}} p_{mi_m^*} - p_{mn}) \ .$$
\end{proof}
\noindent The lower bound on the objective value of the heuristic approach in Theorem~\ref{thm: lower bound loss} depends on the interventions provided in the optimal solution to mothers in $M_{VH}$ but not in $M_{VI}$.

\begin{comment}
Then  Then the our greedy pruning phase ensures \textcolor{red}{ to prove}

$$\sum_{i \in M_{VH}} p_{iv} - p_{i,ni} \geq \sum_{i \in M_{V,I}} p_{i,v} - p_{i,ni}$$
Similarly, let $M_{TV,H}$ and $M_{TV,I}$ ($M_{TV,H}$ and $M_{TV,I}$)
Then we also assume that for all $i\in M_V$ and $j\in M$

terminates after determining the $k$ vaccine drives, and that the optimal integer linear program solution also has $k$ vaccine drives. 
\end{comment}

\subsection{Route Generation}
In principle, we could generate all the feasible routes given the routing constraints, which can then be provided as an input to the optimization problem. However, route generation is intractable in our setting. The total number of routes in our setting exceeds $10^{200}$. As a result, we focus on generating a smaller subset of \textit{promising} routes. Recall that our overall goal is to maximize the cumulative probability of successful vaccination; as a result, it is imperative that given the limited number of vehicles, we pickup mothers that need the ride the most. We capture this idea to define the utility of a route plan. Let $p(\theta, n_k) = p_{m\ell} - p_{mn}$ denote the utility of an arbitrary pick-up node $n_k \in N$ in the routing graph (recall that each pickup node corresponds to a unique mother). The quantity $p_{m\ell} - p_{mn}$ captures the importance of providing a bus pickup to mother $m$ over giving no intervention. Given the utility function, we use guided local search~\cite{kilby1999guided} to generate a subset of routes that maximize the utility.
% Please add the following required packages to your document preamble:
% \usepackage{booktabs}
\begin{table*}[t]
\caption{Description of the features used to learn the probability of success for the interventions}
\centering
\begin{tabular}{@{}lll@{}}
\toprule
Feature            & Type    & Description                                                                                                                                                 \\ \midrule
Vaccination Status & Binary  & A variable denoting whether the mother took her child for vaccination                                                                                       \\
Income Level       & Binary  & A variable denoting whether the family earns more than \$25 or not.                                                                                         \\
Message Status     & Binary  & \begin{tabular}[c]{@{}l@{}}A binary variable that denotes whether the mother received a message \\ about the upcoming vaccination appointment.\end{tabular} \\
Age of the mother  & Integer & The age of the mother in years.                                                                                                                             \\
Age of the child   & Integer & The age of the child in months.                                                                                                                             \\
Number of children & Integer & The number of children the mother has.                                                                                                                      \\
Address            & String  & The neighborhood that the mother lives in.                                                                                                                  \\
Vaccination Center & String  & The address of the nearest vaccination center from the mother's house.                                                                                      \\ \bottomrule
\end{tabular}
\label{tab:data}
\end{table*}

\subsection{Parameter Estimation} Note that optimization problem (\ref{eq:objective}) requires estimates of the probability of success of each intervention for each mother.
However, estimating the probabilities presents a challenge---the interventions --- conducting vaccine drives, operating vehicle routes, and providing travel vouchers, are designed as part of this research; as a result, we lack exact historical data about the interventions. We only have data about phone calls that HelpMum made to all mothers. We compute the probability of success of untested interventions (i.e., vaccine drives, bus pickups, and travel vouchers) through a community survey that HelpMum performed. We estimate the probability of successful vaccination through phone calls and the effect of no interventions by learning a regression model on historical data.

We assume that a set of features $W$ can be used to represent each individual. The set $W$ can encode prior information about interventions, income levels, and geographic location. However, estimating the probabilities presents a challenge---the interventions of conducting vaccine drives, operating vehicle routes, and providing travel vouchers are designed as part of this research; as a result, we lack exact historical data about the interventions. Our partner agency reached out to the community to gather feedback about the potential benefit of such interventions. We use feedback from the community outreach to compute the probability of their success. Our partner agency does have data on its routine phone call operation, as part of which it calls every mother to remind them about upcoming vaccination. We use the historical data about phone calls to estimate the success of making additional phone calls by training a logistic regression model.

\subsubsection{Computing the effect of untested interventions}
%\at{This can be made signficantly shorter to save space}
%\at{ The title is misleading, we arent really estimating these values. Maybe Model parameters for previously untested interventions?}
%As part of its current operation, our partner agency reaches out to mothers to remind them about upcoming vaccination appointments and create awareness about the importance of vaccination. 
To aid the estimation of probability of success for interventions that have not been tested, i.e., travel vouchers, bus pickups, and vaccination drives, our partner agency asked the beneficiaries for feedback. All mothers reported that they would welcome healthcare officials when they conduct door-to-door vaccination campaigns. They also reported that transportation costs were a major barrier for accessibility to health centers, and that pickup service or travel vouchers will be of immense value. Based on the feedback, we assume that the probability of a successful vaccination
%if a healthcare official visits a specific household or if a mother (and her child) are picked up by a van are equal to 1. 
for a mother given vaccine drive or picked up by a van equal to 1. Our partner agency reported that in practice, the efficacy of travel vouchers is slightly lower as the intervention lacks direct monitoring (for example, the travel voucher might not be used or can be used for some other purpose). As a result, for the assignment of a travel voucher, we consider the probability of success to be lower than 1, but higher than the probability of success through a phone call alone or the probability of success in the absence of any interventions\footnote{In practice, we anticipate the probability of success to be slightly lower for bus pickups as well. Our partner agency reports that in practice, a health center can exhaust its stock of vaccines. Our estimates can be improved as data is generated through deployment.}. 

We want to estimate the probability that a mother takes a child for vaccination given the intervention of making additional phone calls to remind her about an upcoming vaccination (``additional'' phone calls refer to targeted calls made after all mothers have been called, which our partner agency already does). Estimating the effect of phone calls is somewhat different than the other interventions; we do have historical data from phone calls made by our partner agency. However, note that the probability we seek to estimate is different from $P(\text{mother going to vaccination} \mid \text{phone call is made})$ as we do not want to estimate the empirical conditional probability by restricting attention to the sub-population for which phone calls were made; rather, the phone call is an \textit{intervention}, meaning that we perform the action of making phone calls, which in turn fixes the value of the random variable. Formally, we are interested in estimating $P(\text{mother going for vaccination} \mid do(\text{phone call}))$. However, we point out that a) the marginal distribution of $W$ is invariant under the intervention of making phone calls, and b) the manner in which an individual reacts to a phone call regarding vaccination uptake is the same irrespective of whether the action of making a phone call is through a targeted intervention or not. As a result, the probability of success given the intervention can be directly estimated from historical data by simply calculating the empirical conditional distribution $P(\text{mother going to vaccination} \mid \text{phone call is made})$.

%%%% RESULTS %%%%
\section{Experiments}\label{subsec: exp data}

\subsection{Data} We collect anonymous information from HelpMum for 500 mothers registered as part of a vaccination tracking system operated by HelpMum. Each data point consists of several features such as the income level of the family, whether the mother received a reminder about the upcoming vaccination appointment, whether she took her child for vaccination, and the age of her child, among others. HelpMum obtained consent from each beneficiary for anonymous data sharing. A description of the features we collect is presented in Table~\ref{tab:data}. We also collect the geographic locations of all 32 vaccination centers in our area of interest. The locations of the rented parking depots and the vaccination sites are shown in Figure~\ref{fig:sites}.

HelpMum plans to deploy the ADVISER framework to all mothers in their vaccination tracking system (about 40000 mothers). As a result, we use the available data to generate two synthetic datasets (D1 and D2), consisting of $40000$ mothers each. We generate features of mothers in D1 and D2 by sampling each feature independently and uniformly at random from the original data of $500$ mothers. For the mothers in $D1$, we compute the probabilities of success for each mother given an intervention as follows: for each $m \in [M]$, we choose $p_{mn}$ uniformly at random from $(0,1)$; followed by $p_{mc}$ uniformly at random from $(p_{mn}, 1)$; $p_{mt}$ uniformly at random from $(p_{mc}, 1)$; and $p_{m\ell}$ uniformly at random from $(p_{mt}, 1)$. We set $p_{mv} = 1$ based on community feedback (see section~\ref{secapp: data}). D1 essentially captures the domain knowledge that we have about the interventions, specifically, for a mother $m$, $p_{mn} \leq p_{mc} \leq p_{mt} \leq p_{m\ell} \leq p_{mv} = 1$. For D2, we estimate the probability of vaccination given no intervention (and vaccination given phone calls) by training a logistic regression model on the original data (details in Appendix~\ref{secapp: data}). The probabilities for remaining interventions, $p_{mt}$, $p_{m\ell}$, and $p_{mv}$ are chosen in a similar manner as in D1.

\begin{figure}[h]
    \centering
    \includegraphics[height=5cm]{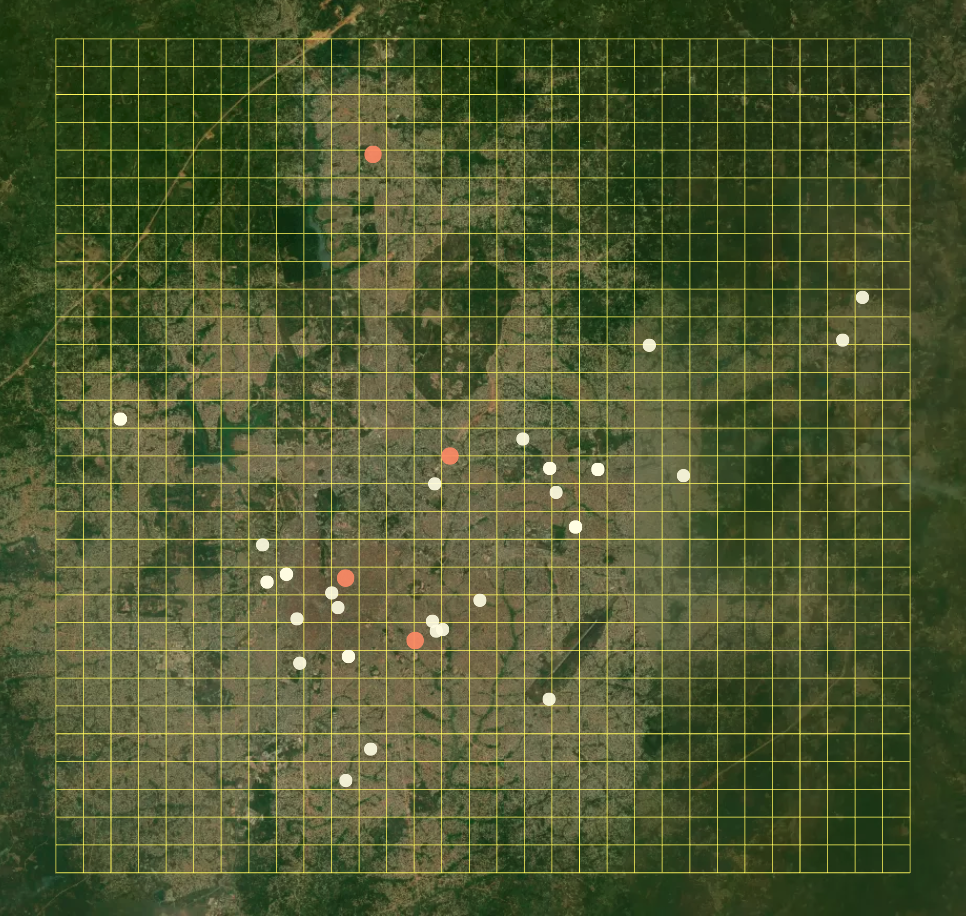}
    \caption{Locations of the rented parking locations (in orange) and the vaccination centers (in white). The yellow lines represent the grid $G$. We see that there the distribution of the vaccination centers is not uniform; however, HelpMum chose to rent a parking location towards the north of the city to ensure that mothers have access to vaccination centers.}
    \label{fig:sites}
\end{figure}

\subsection{Baseline Algorithms:} While prior work does not consist of approaches that optimize the allocation of heterogeneous health resources under uncertainty, we consider the following baselines: 
%We evaluate the performance of our ILP based solution by contrasting it to a round-robin baseline used by the HelpMum. The baseline occurs in four sequential stages, described as follows:

\begin{figure*}[t]
\centering
\includegraphics[height=5cm]{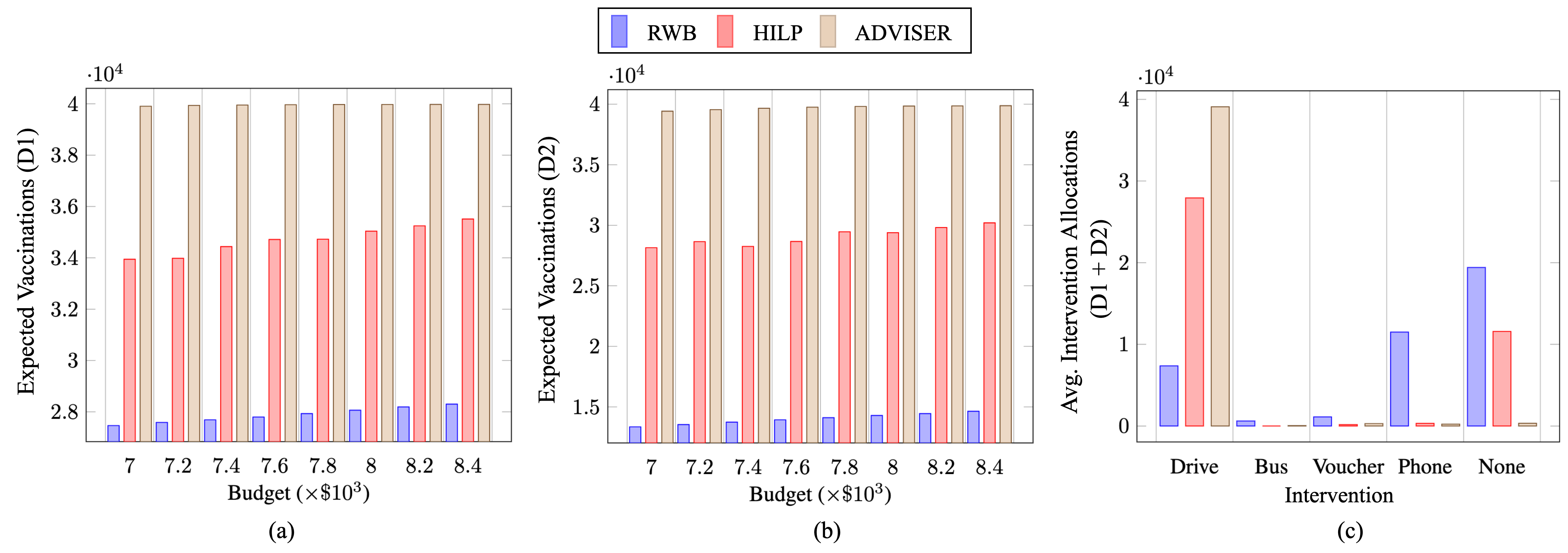}
\caption{(a) Expected number of vaccinations in D1. (b) Expected number of vaccinations in D2. (c) The distribution of the interventions through the different algorithms. We observe that ADVISER outperforms the other two baselines. Also, both ADVISER and HILP choose vaccination drives as the dominant intervention; RWB's poor performance can be explained by having fixed allocations for each intervention.}
\label{fig:combinedResult}
\end{figure*}

\begin{figure*}[h]
\centering
\includegraphics[height=5cm]{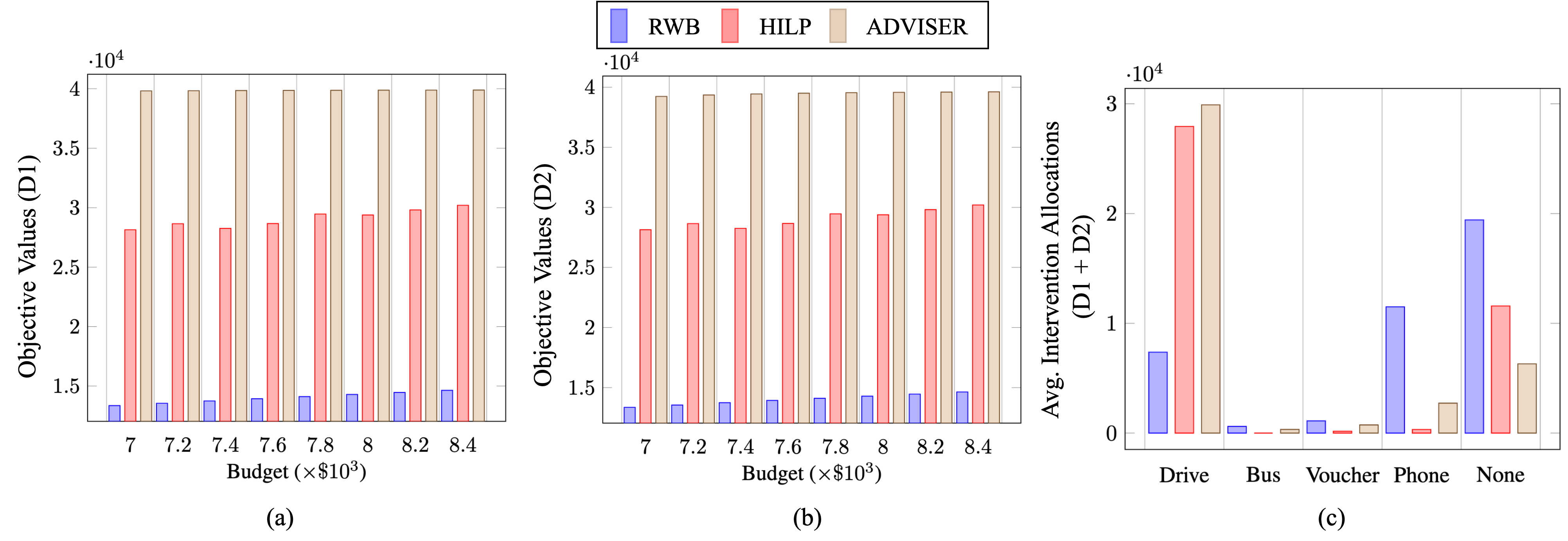}
\caption{The output of ADVISER with number of vaccination drives capped to 400. (a) Objective values for D1. (b) Objective values for D2. (b) Number of interventions allocated averaged across D1 and D2.}
\label{fig:combinedObjAllocNV}
\end{figure*}

\textbf{Real-world Baseline}: For the first baseline, we asked HelpMum to allocate the interventions solely based on domain expertise. HelpMum identified 33 fixed neighbourhoods, one in each local government (similar to administrative jurisdictions) to conduct vaccination drives on alternate days. Bus routes are operated each day using all the $F$ vehicles from the existing depots. Each vehicle serves one vaccination center each day in a round-robin manner. For each trip, mothers who are within some predefined distance from the routes are considered. Note that since the routes determined by our partner agency is fixed, mothers must walk to the bus route; the predefined distance is a check on the distance that a mother can walk with a child to get on a bus. Then, travel vouchers are distributed to mothers who live more than 10 kms. away from a vaccination center. The vouchers are distributed according to income levels, i.e., mothers who have relatively lower income are targeted first. Finally, the remaining budget is used to make targeted phone calls. Our partner agency decided to target mothers based on the age of their child---the younger the child, the higher the priority. This decision is motivated by the fact that an infant requires more vaccination doses, and missing one dose hampers the schedule of upcoming vaccine doses.

\textbf{Hierarchical Integer Linear Programming (HILP)}: Motivated by the use of hierarchical planning to create tractable approaches for resource allocation ~\cite{zhang2016using,pettet2021hierarchical}, we design a baseline that solves optimization problem (~\ref{eq:objective}) in a hierarchical manner. First, we leverage the geographic density of the beneficiaries to identify clusters (using $k$-means~\cite{macqueen1967some}). The overall budget is distributed across clusters in proportion to the number of mothers in each cluster. A separate ILP is then solved directly for each cluster. We start by dividing the entire set of mothers into different clusters via k-means clustering on their geographic locations. Our goal is to create smaller ILP formulations per cluster. Naturally, the abstraction introduced by hierarchical planning also induces a trade-off between scalability and utility. In order to select the optimal number of clusters, we use the \textit{elbow method}~\cite{bholowalia2014ebk} based on the inertia of the clusters (the sum of squared distances of samples to their closest cluster center). The clusters were initialized by sampling the locations of the mothers uniformly at random. The number of clusters for datasets D1 and D2 are 35 and 30 respectively (see Figures \ref{fig: elbow D1} and \ref{fig: elbow D2}).

\subsection{Experiment Setup} 
In consultation with HelpMum, we set the costs as follows: $e_c = \$0.1, e_t =\$1.1$, $e_v = \$15$, and $e_\ell = \$20$. We optimize the allocation of resources for $T=30$ days and $\gamma_v=100$. We vary the overall budget $b$ between $\$7000$ to $\$8400$, and use $b-\$1000$ as a threshold for the greedy pruning procedure. 
% \vn{We do not always use $b -1000$ for pruning. We set the cut off as $b' = 1000$ but we do not always use it.}
Our implementation is available at \small \texttt{\url{https://anonymous.4open.science/r/IJCAI_42/}}\normalsize. All experiments were run on a Linux machine with 64GB RAM and an 8-core AMD processor. We implement ADVISER using the Python programming language and solve the ILP using Google OR Tools with SCIP solver. %(Solving Constraint Integer Programs) 

\subsection{Results} 

\begin{figure*}[]
\centering
\includegraphics[height=5cm]{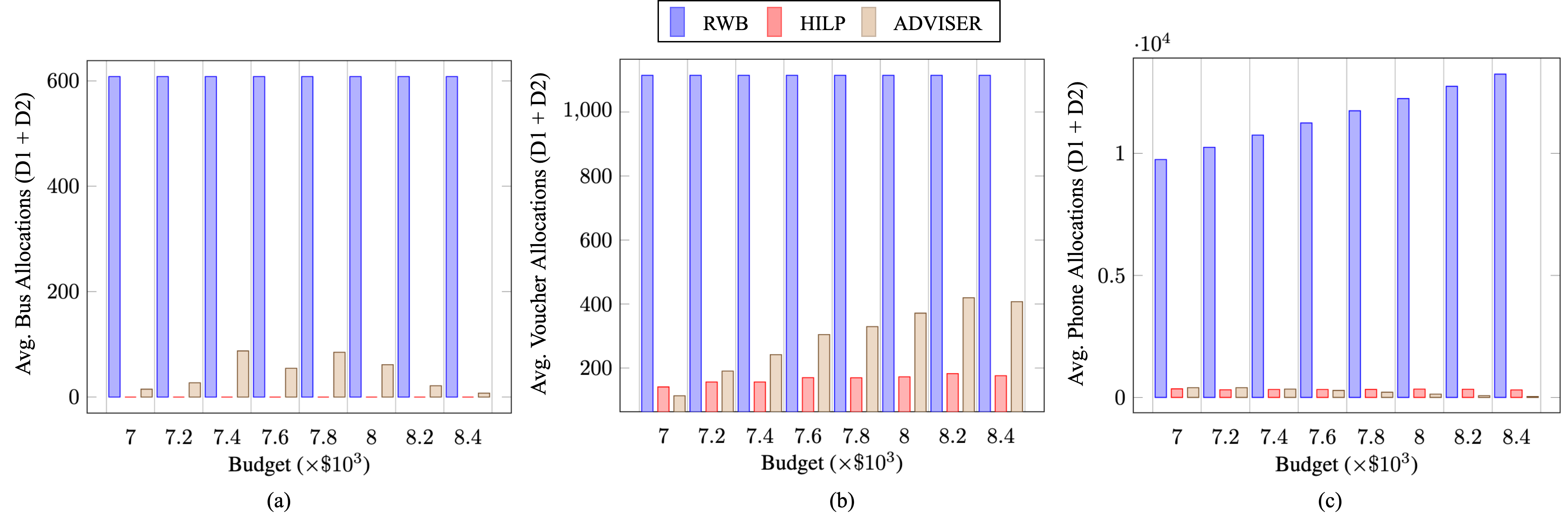}
\caption{Average Intervention Allocations (D1 + D2) for (a) Vehicle Routes, (b) Travel Vouchers, and (c) Phone Calls}
\label{fig:ADVISER_alloc}
\end{figure*}

We show the expected number of successful vaccinations in D1 and D2 in Figure~\ref{fig:combinedResult} (a) and Figure~\ref{fig:combinedResult} (b) respectively.
We observe that the expected vaccine uptake achieved via ADVISER is more than $39970$ for all the budgets considered in the experiment,
whereas the average vaccine intake achieved by the baseline algorithm is at most $26000$ (we performed the simulation on $40000$ mothers). The average number of mothers who received intervention through ADVISER is $39672$, in comparison to $28426$ and $20588$ through HILP and RWB respectively.

We also observe in Figure~\ref{fig:combinedResult} (c) the distribution of the interventions; as expected, both the ILP-based approaches capitalize on solutions with more vaccination drives. However, we point out the importance of the other interventions as well. In practice, the number of vaccination drives is bound by the number of available healthcare workers for the service, whose regular job is to work at healthcare centers. We observed that when the number of vaccination drives is restricted to 400 per month, the average number of mothers who are targeted for pickups more than triples as compared to Figure~\ref{fig:combinedResult} (c) (result presented in appendix~\ref{secapp: results}). Moreover, HelpMum seeks to utilize ADVISER to improve antenatal care for pregnant mothers as well, which will lower the realization of $\gamma_v$ (number of children who can be vaccinated during a drive). Finally, we point out that it is crucial that the ADVISER framework is tractable. On average, the computational time taken to generate solutions by the ADVISER framework is $254$ seconds, as opposed to $4386$ seconds by HILP (RWB can generate solutions in about 10 seconds on average).

\begin{figure}[]%
    \centering
    \includegraphics[width = 0.75\columnwidth]{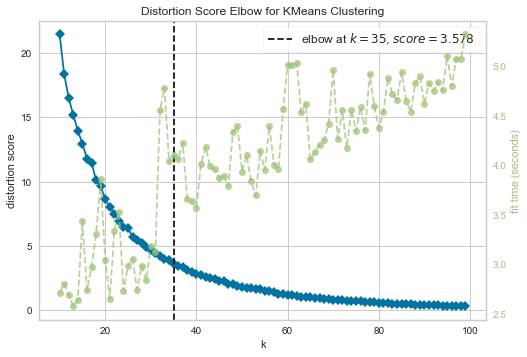}%
    %\subfloat[\centering]{{\includegraphics[width=10cm]{imgs/HelpMum_Workflow.png}}}%
    \caption{Elbow Curve for D1 with $x$ axis representing the number of clusters and $y$ axis representing the distortion score}%
    \label{fig: elbow D1}%
\end{figure}

In order to evaluate the performance of all the approaches under different parameters, we repeat the experiments by capping the maximum number of vaccination drives to 400, i.e., about 13 vaccination drives each day. Essentially, we want to test the robustness of the approaches when sufficient healthcare workers are not available to perform door-to-door vaccination delivery. We show the results in Figure~\ref{fig:combinedObjAllocNV}. We observe that in comparison to the our original setting (shown in the main body of the paper in Figure~\ref{fig:combinedResult}, the number of mothers picked up by the bus service more than triples. This observation highlights the need for a heterogeneous set of interventions. Also, the objective value attained by capping the number of vaccination drives is lower than without the existence of such a bound; however, ADVISER significantly outperforms the baseline approaches in both the settings.

We also show the average distribution of travel vouchers, bus pickups, and phone calls in Figure~\ref{fig:ADVISER_alloc}. We observe that RWB, due to its fixed nature of resource allocation, results in the distribution of a large number of travel vouchers and bus pickups in comparison to ADVISER and HILP (Figure~\ref{fig:ADVISER_alloc} (a)). While travel vouchers and bus pickups are effective modes of intervention, they are relatively more expensive than conducting vaccination drives. The ILP-based approaches (ADVISER and HELP) search the decision-space better an only use such interventions where (intuitively) conducting vaccination drives is not feasible.

\begin{figure}[h!]%
    \centering
    \includegraphics[width = 0.75\columnwidth]{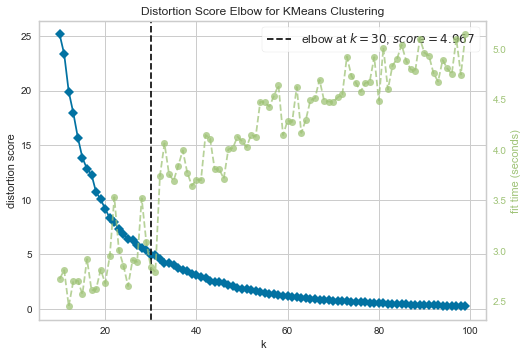}%
    %\subfloat[\centering]{{\includegraphics[width=10cm]{imgs/HelpMum_Workflow.png}}}%
    \caption{Elbow Curve for D2 with $x$ axis representing the number of clusters and $y$ axis representing the distortion score}%
    \label{fig: elbow D2}%
\end{figure}

\subsection{Deployment:} HelpMum is currently planning a pilot program in Ibadan, the largest city in West Africa. We show an initial version of the tool based on the ADVISER framework in Figure~\ref{fig:dashboard}.

\begin{figure}[h]
    \centering
    \includegraphics[width=0.9\columnwidth]{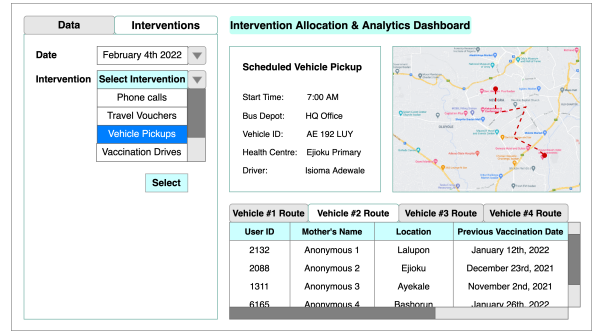}
    \caption{An initial version of the tool that HelpMum will use for deployment. The tool is under construction.}
    \label{fig:dashboard}
\end{figure}

\section{Conclusion}

In collaboration with HelpMum, a non-profit organization in Nigeria, we present ADVISER: AI Driven Vaccination Intervention OptimiSER. Our framework can accelerate our progress towards goals in SDG 1 and SDG 3 by increasing access to healthcare services and vaccination, and by reducing maternal and infant mortality in resource-constrained settings. HelpMum is currently planning a pilot of the ADVISER framework in collaboration with local governments, which will be the first of its kind in Nigeria. 
\section{Ethical Statement}

Our goal in this project is to improve vaccination uptake in Nigeria. We point out that our partner agency, NGO, works closely with state and local governments. The specific interventions were designed by NGO in collaboration with domain experts and its advisory board. NGO is currently planning to deploy the ADVISER framework in the largest city of Nigeria; however, NGO plans to do this in collaboration with the local governments. The fairness of the solution quality of any AI driven framework for public intervention needs to be studied.  While a large part of the interventions suggested by ADVISER are targeted towards low-income individuals (by construction), the objective function of our framework can be modified to add a score function that measures fairness of allocation. Also, it is possible to add arbitrary constraints on the number of interventions across groups in our problem (based on geographic locations). However, the very nature and form of such constraints needs to be determined in collaboration with NGO and local governments.

\section{Acknowledgements}

We would like to acknowledge funding from the Google AI for Social Good Grant, Google AI for Social Good ``Impact Scholars'' program, National Science Foundation grant CNS-1952011, HelpMum, and Vanderbilt University. We would like to thank the staff at HelpMum for helping conduct community surveys and perform data collection.

%\section{Technical Appendix}\label{tech_app}
%% The file named.bst is a bibliography style file for BibTeX 0.99c
\bibliographystyle{named}
\bibliography{references}

% \clearpage
% \appendix
% \input{appendix}

\end{document}